\newtheorem{theorem}{Theorem}[section]
\newcommand{\RR}{\mathbb{R}}
\newcommand{\MYfooter}{\smash{
\hfil\parbox[t][\height][t]{.45\textwidth}{}\hfil\hbox{}}}
\def\ps@IEEEtitlepagestyle{%
\def\@oddhead{\mbox{}2016 ICSEE International Conference on the Science of Electrical Engineering \rightmark \hfil }%
\def\@oddfoot{\MYfooter}%
\def\@evenfoot{\MYfooter}}
\begin{document}

\title{Randomized Independent Component Analysis}

\author{\IEEEauthorblockN{Matan Sela \qquad Ron Kimmel}
\IEEEauthorblockA{Department of Computer Science \\ Technion - Israel Institute of Technology}}

\maketitle

\begin{abstract}
Independent component analysis (ICA) is a method for recovering statistically independent 
 signals from observations of unknown linear combinations of the sources.
Some of the most accurate ICA decomposition methods require searching for the inverse transformation which minimizes different approximations of the Mutual Information, a measure of statistical independence of random vectors.
Two such approximations are the Kernel Generalized Variance or the Kernel Canonical Correlation which has been shown to reach the highest performance of ICA methods.
However, the computational effort necessary just for computing these measures is cubic 
 in the sample size. Hence, optimizing them becomes even more computationally demanding, 
 in terms of both space and time.
Here, we propose a couple of alternative novel measures based on randomized features of the samples - the Randomized Generalized Variance and the Randomized Canonical Correlation.
The computational complexity of calculating the proposed alternatives is linear in the 
 sample size and provide a controllable approximation of 
 their Kernel-based non-random versions.
We also show that optimization of the proposed statistical properties yields a comparable 
 separation error at an order of magnitude faster compared to Kernel-based measures.
\end{abstract}

\section{Introduction}
Independent component analysis (ICA) is a well-established problem in unsupervised learning 
 and signal processing, with numerous applications including blind source separation, 
 face recognition, and stock price prediction.
onsider the following scenario. A couple of speakers are located in a room. Each of them plays a different sound . Two microphones which are arbitrarily placed in the same room record unknown linear combinations of the sounds.  The goal of ICA is to process the signals recorded by the mics for recovering the soundtracks played by the speakers.

More precisely,
the basic idea of ICA is to recover $n_s$ statistically independent components of a non-Gaussian 
 random vector $s = \left( s_1 ,...,s_{n_s} \right)^T$ from  $n_s$ observed linear mixtures
 of its elements. 
That is, we assume that some unknown matrix $A \in \RR^{n_s \times n_s}$ mixes the entries of
 $s$ such that $x = As$. 
From samples of $x$, the goal is to estimate an un-mixing matrix $W$ such that $y = Wx$ 
 and the components of $y$ are statistically independent.

The matrix $W$ is found by a minimization process over a contrast function which measures
 the dependency between the unmixed elements. 
Ideally, finding the matrix $W$ which minimizes the {\it mutual information} (MI) provides the theoretically most accurate reconstruction.
The MI is defined as the Kullback-Liebler divergence between the joint distribution of $y$, $p(y_1,...,y_{n_s})$, and the product of its marginal distributions, $\prod_{i=1}^{n_s} p(y_i)$.
It is a non-negative function which vanishes if and only if the components of $y$
 are mutually independent.
Unfortunately, in practical applications, the joint and the marginal distributions
 are unknown. 
Estimating and optimizing the MI directly from the samples is difficult. 
Fitting a parametric or a nonparametric probabilistic model to the data based on which 
 MI is calculated is problem dependent and is often inaccurate. 

Many researches proposed alternative contrast functions.
One of the most robust alternative is the $\mathcal{F}$-correlation
 proposed in \cite{Bach:2003:KIC:944919.944920}. 
This function is evaluated by mapping the data samples into a
 {\it reproducing kernel Hilbert space}, $\mathcal{F}$ where a canonical correlation 
  analysis is performed. 
 The largest kernel canonical correlation (KCC) and the product of the kernel canonical
  correlations, known as the {\it kernel generalized variance} (KGV), are two possible 
  contrast functions. 
Despite their superior performance, algorithms based on minimizing the KCC or the KGV 
 (denoted as kernelized ICA algorithms) are less attractive for practical uses since the complexity of
  exact evaluation of these functions is cubic in the sample size.

A recent strand of research suggested randomized nonlinear feature maps for approximating 
 the reproducing kernel Hilbert space corresponding to kernel functions. 
This technique enables revealing nonlinear relations in a data by performing linear data
 analysis algorithms, such as Support Vector Machine \cite{NIPS2008_3495}, Principal Component
 Analysis and Canonical Correlation Analysis \cite{lopez2014randomized}.
These methods approximate the solution of kernel methods while reducing their complexity 
 from cubic to linear in the sample size.

Here, we propose two alternative contrast functions, Randomized Canonical Correlation (RCC) and Randomized
 Generalized Variance (RGV) which approximate KCC and KGV, respectively, yet require just 
 a fraction of the computational effort to evaluate.
Furthermore, the proposed random approximations are smooth, easy to optimize and converge 
 to their kernelized version as the number of random features grows. 
Finally, we propose optimization algorithms similar to those proposed in
 \cite{Bach:2003:KIC:944919.944920}, for solving the ICA problem. 
We demonstrate that our method has a comparable accuracy as KICA but runs 12 times 
 faster while separating components of real data.

\section{Background and Related Works}
\subsection{Canonical Correlation Analysis}
Canonical correlation analysis is a classical linear analysis method introduced 
 in \cite{hotelling1936relations}, which generalizes the Principal Component Analysis (PCA) 
 for two or more random vectors. 
In PCA, given samples of a random vector $x \in \RR^d$, the idea is to search for a vector 
 $w \in \RR^d$, which maximizes the variance of the projection of $x$ onto $w$. 
In practice, the principal components are the eigenvectors corresponding to the largest eigenvalues
 of the empirical covariance matrix $C = \frac{1}{N} X X^T$, where $X$ is a matrix containing
 samples of $x$ as its columns, and $N$ is the sample size.

In canonical correlation analysis (CCA), given a couple of random vectors $x \in \RR^{d_x}$ and 
 $y \in \RR^{d_y}$, one looks for a pair of vectors $w_x \in \RR^{d_x}$ and $w_y \in \RR^{d_y}$, 
 that maximize the correlation between the projection of $x$ onto $w_x$ and the projection 
 of $y$ onto $w_y$. 
More formally, CCA can be formulated as
\begin{align*}
 \max_{w_x \in \RR^{d_x},w_y \in \RR^{d_y}} \left| \text{corr}(w_x^Tx,w_y^Ty)\right| =  \cr =
  \frac{\left|\text{cov}(w_x^Tx,w_y^T y)\right|}{(\text{var}(w_x^T x))^{1/2}(\text{var}(w_y^T y))^{1/2}}
\end{align*}

Let $X \in \RR^{d_x \times N}$ and $Y \in \RR^{d_y \times N}$ be matrices of samples of $x$ 
and $y$, respectively.
The empirical Canonical Correlation Analysis problem is given by
\begin{align*}
\underset{W_x \in \RR^{d_x \times r},W_y \in \RR^{d_y \times r}}{\text{minimize}}
& & \| \widehat{\text{corr}}(W_x^T X,W_y^T Y) - I \| \cr
 \text{subject to}
&&  \widehat{\text{corr}}(W_x,W_x) = I, \cr
&& \widehat{\text{corr}}(W_y,W_y) = I,
\end{align*}
 where $\widehat{\text{corr}}( \cdot, \cdot)$ is the empirical correlation.
The canonical correlations $\lambda_1,...,\lambda_r$ are found by solving the following 
 generalized eigenvalue problem
\begin{align*}
\left( \begin{array}{cc} 0 & C_{XY} \\ C_{YX} & 0 \end{array} \right)
\left( \begin{array}{c} w_x \\ w_y \end{array} \right) = \cr 
\lambda \left( \begin{array}{cc} C_{XX}+\gamma I & 0 \\ 0 & C_{YY}+\gamma I  \end{array} \right)
\left( \begin{array}{c} w_x \\ w_y \end{array} \right), 
\end{align*}
where $C_{XY} = \frac{1}{N}XY^T$ is the empirical cross covariance matrix, and $\gamma I$ 
 is added to the diagonal for stabilizing the solution. 
As discussed next, the kernelized ICA method use a kernel formulation of  CCA for evaluating
 its contrast functions which measures the independence of random variables.
 
\subsection{Kernelized Independent Component Analysis}
In ICA, we minimize a contrast function which is defined as any non-negative function of two or more random variables that is zero if and only if they are statistically independent.
By definition, a pair of random variables $x_1$ and $x_2$ are said to be statistically 
 independent if $p(x_1,x_2) = p(x_1)p(x_2)$. 
As a result, for any two functions $f_1,f_2 \in \mathcal{F}$
\begin{align*}
E\{f_1(x_1)f_2(x_2)\} = E\{f_1(x_1)\} E\{f_2(x_2)\}
\end{align*}
Equivalently,
\begin{align*}
\text{cov}\left(f_1(x_1),f_2(x_2)\right) = 0.
\end{align*}
The $\mathcal{F}$-correlation $\rho_{\mathcal{F}}$ is defined as the maximal correlation among all the functions $f_1(x_1),f_2(x_2)$ in $\mathcal{F}$. That is
\begin{align*}
\rho_{\mathcal{F}} &=& \max_{f_1,f_2 \in \mathcal{F}} \left| \text{corr}\left( f_1(x_1),f_2(x_2) \right) \right|\cr &=& \max_{f_1,f_2 \in \mathcal{F}} \frac{\left| \text{cov}\left( f_1(x_1),f_2(x_2) \right)\right|}{\left( \text{var} f_1(x_1) \right)^{1/2}\left( \text{var} f_2(x_2) \right)^{1/2}}
\end{align*}
Obviously, if $x_1$ and $x_2$ are independent, then $\rho_{\mathcal{F}}=0$. 
As proven in \cite{Bach:2003:KIC:944919.944920},  if $\mathcal{F}$ is a functional space that 
 contains the Fourier basis ($f(x)=e^{i\omega x}, \omega \in \RR$), then, the opposite is also true. 
This property implies that $\rho_{\mathcal{F}}$ can replace the mutual information while searching
 for a matrix $W$ that transforms the vector $(x_1,x_2)^T$ into a vector with independent components.

Computing the $\mathcal{F}$-correlation directly for an arbitrary space $\mathcal{F}$ requires estimating
 the correlation between every possible pair of functions in the space, making the calculation impractical. However, if $\mathcal{F}$ is a reproducing kernel Hilbert space (RKHS), the $\mathcal{F}$-correlation can
 be evaluated. 
Let $k(x,y)$ be a kernel function associated with the inner product between functions in a reproducing
 kernel Hilbert space $\mathcal{F}$. 
Denote $\Phi(x)$ as the feature map corresponding to the kernel $k(x,y)$, such that 
 $k(x,y) = \langle \Phi(x) , \Phi(y) \rangle$.
The feature map is given by $\Phi(x) = k(\cdot,x)$.
Then, from the {\it reproducing property} of the kernel, for any function $f(x)\in \mathcal{F}$, 
\begin{align*}
f(x) = \langle \Phi(x),f \rangle, \qquad \forall f \in \mathcal{F}, \forall x \in \RR.
\end{align*}
It follows that for every functional space $\mathcal{F}$ associated with a kernel function $k(x,y)$ and a feature map $\Phi(x)$, the $\mathcal{F}$-correlation between a pair of random variables $x_1$ and $x_2$ can be formulated as
\begin{align*}
\rho_{\mathcal{F}} = \max_{f_1,f_2 \in \mathcal{F}} \left| \text{corr}\left( \langle 
    \Phi(x_1), f_1\rangle, \langle \Phi(x_2),f_2 \rangle \right) \right|.
\end{align*}

Let $\{x_1^i\}_{i=1}^N$ and $\{x_2^i\}_{i=1}^N$ be the samples of the random variables
 $x_1$ and $x_2$, respectively.
Any function $f \in \mathcal{F}$ can be represented by 
 $f = \sum_{i=1}^N \alpha^i \Phi(x^i) + f^{\bot}$, where $f^{\bot}$ is a function in the 
  subspace of functions orthogonal to $\text{span}\{\Phi(x^1),...,\Phi(x^N)\}$.
Thus,
\begin{align*}
&\widehat{\text{cov} }\left( \langle \Phi(x_1), f_1 \rangle, \langle \Phi(x_2),f_2 \rangle \right) =
\cr &= \frac{1}{N} \sum_{k=1}^N \langle \Phi(x_1^k) , f_1 \rangle \langle \Phi(x_2^k),f_2 \rangle 
 \cr &=\frac{1}{N} \sum_{k=1}^N \langle \Phi(x_1^k) , \sum_{i=1}^N \alpha_1^i \Phi(x_1^i) + f_1^{\bot} \rangle \langle \Phi(x_2^k),\sum_{i=1}^N \alpha_2^i \Phi(x_2^i) + f_2^{\bot} \rangle \cr
& = \frac{1}{N} \sum_{k=1}^N \langle \Phi(x_1^k) , \sum_{i=1}^N \alpha_1^i \Phi(x_1^i) \rangle \langle \Phi(x_2^k),\sum_{i=1}^N \alpha_2^i \Phi(x_2^i) \rangle  \cr
&= \frac{1}{N} \sum_{k=1}^N \sum_{i=1}^N \sum_{j=1}^N \alpha_1^i K_1(x_1^i,x_1^k) K_2(x_2^j,x_2^k)  \alpha_2^i
\cr &= \frac{1}{N} \alpha_1^T K_1 K_2 \alpha_2,
\end{align*}
where $K_1$ and $K_2$ are the empirical kernel Gram matrices of $x_1$ and $x_2$, respectively.
With a similar development for the empirical variance, one conclude
\begin{align*}
\widehat{\text{var}} \left( \langle \Phi(x_1),f_1 \rangle \right) = \cr \frac{1}{N} \alpha_1^T K_1 K_1 \alpha_1 \cr
\widehat{\text{var}} \left( \langle \Phi(x_2),f_2 \rangle \right) = \cr \frac{1}{N} \alpha_2^T K_2 K_2 \alpha_2 
\end{align*}

Therefore, the empirical $\mathcal{F}$-correlation between $x_1$ and $x_2$ is given by
\begin{align*}
\hat{\rho}_{\mathcal{F}} \left( x_1,x_2 \right) &= &
 \max_{\alpha_1,\alpha_2 \in \RR^N} \frac{\alpha_1^T K_1 K_2 \alpha_2}
  {\left( \alpha_1^T K_1^2 \alpha_1 \right)^{1/2} \left( \alpha_2^T K_2^2 \alpha_2 \right)^{1/2}}
\end{align*}
 which can be evaluated by solving the following generalized eigenvalue problem
\begin{align*}
\left( \begin{array}{cc} 0 & K_1 K_2 \\ K_2 K_1 & 0 \end{array} \right)
\left( \begin{array}{c} \alpha_1 \\ \alpha_2 \end{array} \right) = \cr 
\lambda \left( \begin{array}{cc} K_1^2 & 0 \\ 0 & K_2^2  \end{array} \right)
\left( \begin{array}{c} \alpha_1 \\ \alpha_2 \end{array} \right).
\end{align*}
The calculation of the $\mathcal{F}$-correlation is thus equivalent to solving a kernel CCA problem.
For ensuring computational stability, it is common to use a regularized version of KCCA given by
\begin{align*}
\left( \begin{array}{cc} 0 & K_1 K_2 \\ K_2 K_1 & 0 \end{array} \right)
\left( \begin{array}{c} \alpha_1 \\ \alpha_2 \end{array} \right) = \cr 
\lambda \left( \begin{array}{cc} (K_1+\frac{N\kappa}{2}I)^2 & 0 \\ 0 & (K_2+\frac{N\kappa}{2}I)^2  \end{array} \right)
\left( \begin{array}{c} \alpha_1 \\ \alpha_2 \end{array} \right).
\end{align*}

For $m$ random variables, the regularized KCCA amounts to finding the largest generalized eigenvalue of the following system
\begin{align*}&
\small{
\left( \begin{array}{cccc} 0 & K_1 K_2 & \cdots & K_1 K_m  \\
				     K_2 K_1 & 0  & \cdots & K_2 K_m \\
				     \vdots & \vdots & & \vdots \\
				     K_m K_1 & K_m K_2 & \cdots & 0 \end{array} \right)
\left( \begin{array}{c} \alpha_1 \\ \alpha_2 \\ \vdots \\ \alpha_m \end{array} \right)}  = \cr
&\small{
\lambda \left( \begin{array}{cccc} (K_1 + \frac{N \kappa}{2} I)^2 & 0 & \cdots & 0  \\
				     0 & (K_2 + \frac{N \kappa}{2} I)^2  & \cdots & 0 \\
				     \vdots & \vdots & & \vdots \\
				     0 & 0 & \cdots & (K_m + \frac{N \kappa}{2} I)^2 \end{array} \right)
\left( \begin{array}{c} \alpha_1 \\ \alpha_2 \\ \vdots \\ \alpha_m \end{array} \right)}
\end{align*}
 or in short, $\mathcal{K}_{\kappa} \alpha = \lambda \mathcal{D}_{\kappa} \alpha$.

This is the first contrast function proposed in \cite{Bach:2003:KIC:944919.944920}.
The second one, denoted as the kernel generalized variance, depends upon not only the largest 
 generalized eigenvalue of the problem above but also on the product of the entire spectrum.
This is derived from the Gaussian case where the mutual information is equal to minus half the
 logarithm of the product of generalized eigenvalues of the regular CCA problem.
In summary, the kernel generalized variance is given by
\begin{align*}
I(x_1,...,x_m) =  -\frac{1}{2} \sum_{i=1}^P \log{\lambda_i},
\end{align*}
 where $P$ is the rank of $\mathcal{K}_{\kappa}$.
Figure \ref{fig:kgvonly} demonstrates the Kernel Generalized Variance versus the Mutual Information 
 for a Gaussian kernel and for different $\sigma$'s. 
The complexity of a naive implementation of both KCC and KGV is $O(N^3)$. 
However, an approximate solution can be computed using Cholesky decomposition which reduces the 
 complexity to be roughly quadratic in the sample size.

\begin{figure}[ht]
\label{fig:kgvonly}
\includegraphics[width=.45\textwidth]{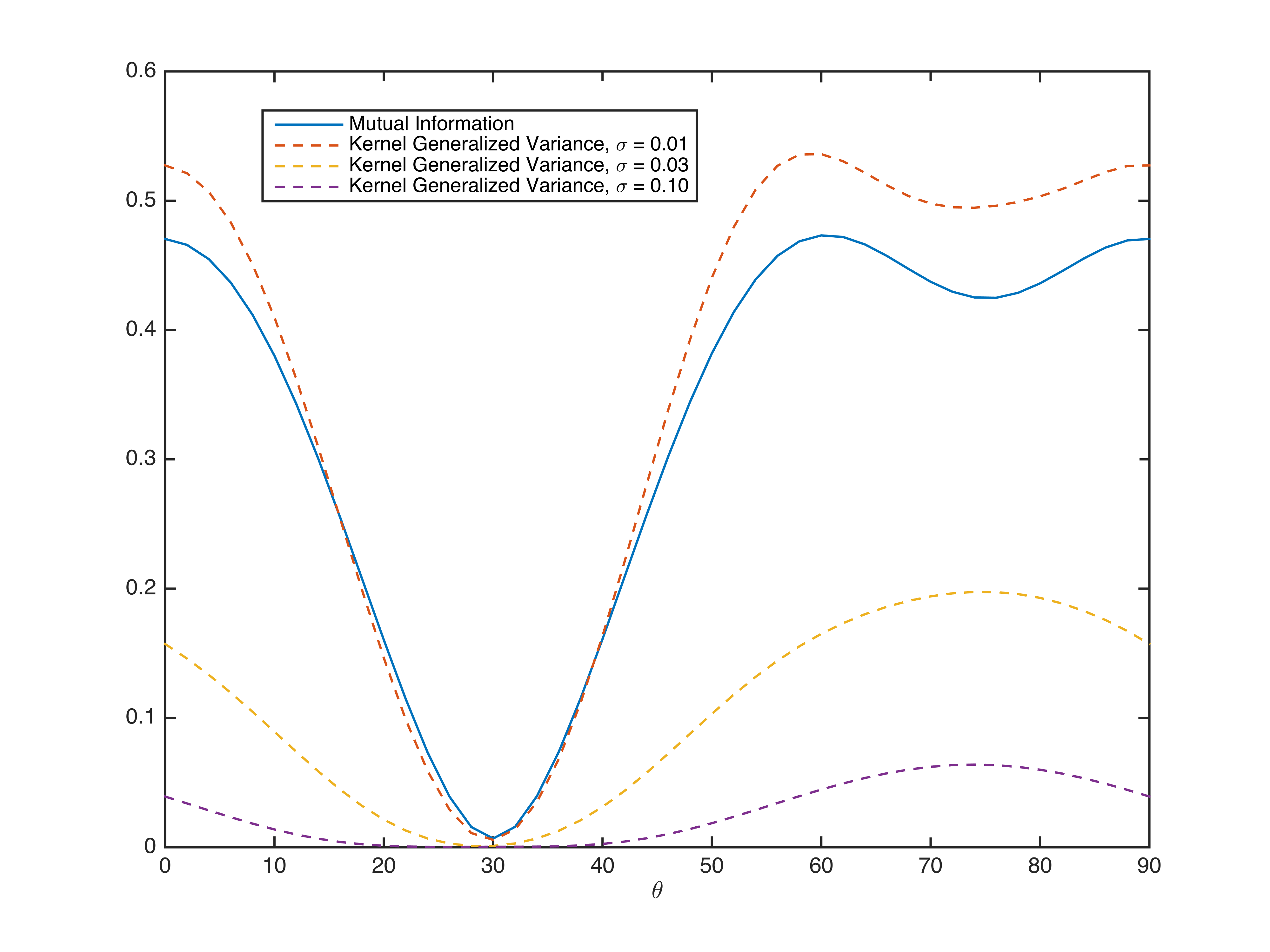}
	\caption{\small The mutual information and the Kernel Generalized Variance for different
     $\sigma$ values as a function of the angle of rotation of the orthogonal matrix $W$.
   }
	\label{fig:examples}
\end{figure}

\subsection{Randomized Features}
In kernel methods, each data sample $x$ is mapped to a function in the reproducing kernel Hilbert space $\Phi(x)$, where the analysis is performed. 
The inner product between two representational functions $\Phi(x)$ and $\Phi(y)$ can be evaluated 
 directly on the samples $x$ and $y$ using the kernel function $k(x,y)$. For real valued, 
 normalized ($k(x,y)\leq 1$), shift invariant kernels $\RR^d \times \RR^d$
\begin{align*}
&k(x,y)  =  \int_{\RR^d} p(w) e^{-jw^T(x-y)} dw  \approx \cr  
&\approx \sum_{i=1}^m \frac{1}{m} e^{-j w_i^T x} e^{j w_i^T y} \cr
&= \sum_{i=1}^m \frac{1}{m} \cos(w_i^Tx+b_i) \cos(w_i^Ty+b_i),
\end {align*}
 where $p(w)$ is the inverse Fourier transform of $k$ and $b_i \sim U (0,2\pi)$, 
 and $w_i$'s are independently drawn from  $p(w)$. 
Thus, the kernel function can be approximated by transforming the data samples into an $m$-dimensional 
 random space $z(x) = \frac{1}{\sqrt{m}} \left[ \cos(w_1^Tx + b_1) ,..., \cos(w_m^Tx + b_m) \right] $ 
 and taking the inner product between the maps.
\begin{align*}
 k(x,y) \approx \langle z(x), z(y) \rangle.
\end{align*}

The following theorem shows that the reproducing kernel Hilbert space corresponding to $k$ 
 can be approximated by a random map, in the inner product sense.
\begin{theorem}
Let $X \in\RR^{d \times N}$ be a matrix containing samples of $x$ ordered in its columns.
Denote $z(X) \in \RR^{m \times N}$ as a matrix containing the Fourier random features of
 each column of $X$ as its columns. Denote $\hat{K} = z(X)^Tz(x)$ and $K$ as the empirical 
  kernel matrix corresponding to the same kernel as $\hat{K}$. 
Then
\begin{align*}
\mathds{E} \| \hat{K} - K \| \leq \sqrt{\frac{3n^2\log{n}}{m}} + \frac{2n\log n}{m},
\end{align*}
 where the norm is an operator norm.
\end{theorem}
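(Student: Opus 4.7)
The plan is to recognize this as a matrix concentration statement and apply a matrix Bernstein--type inequality. The key observation is that the randomized Gram matrix admits a natural representation as an empirical average of i.i.d.\ rank-one random matrices whose mean is exactly $K$. Writing $v_i \in \RR^n$ for the vector with entries $(v_i)_k = \cos(w_i^T x^k + b_i)$, the random feature construction gives
\begin{align*}
\hat{K} \;=\; \frac{1}{m}\sum_{i=1}^m v_i v_i^T, \qquad K \;=\; \mathds{E}\bigl[v_i v_i^T\bigr],
\end{align*}
so that $\hat K - K = \frac{1}{m}\sum_{i=1}^m Y_i$ with $Y_i := v_i v_i^T - K$ centered and i.i.d. This is the standard setup to which one applies the expectation form of matrix Bernstein.

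First I would bound the per-summand operator norm. Since $|\cos(\cdot)|\le 1$, we have $\|v_i\|_2^2 \le n$, hence $\|v_iv_i^T\|\le n$, and by Jensen $\|K\|\le n$, giving $\|Y_i\|\le 2n$ almost surely. This provides the Bernstein parameter $R = 2n$.

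Next I would control the matrix variance $\sigma^2 := \bigl\|\sum_{i=1}^m \mathds{E}[(Y_i/m)^2]\bigr\|$. Using $(v_iv_i^T)^2 = \|v_i\|_2^2\, v_iv_i^T$ and $\|v_i\|_2^2 \le n$, one gets $\mathds{E}[(v_iv_i^T)^2] \preceq n\,\mathds{E}[v_iv_i^T] = n K$. Combined with $\mathds{E}[Y_i^2] = \mathds{E}[(v_iv_i^T)^2] - K^2 \preceq nK$ and $\|K\|\le n$, this yields $\sigma^2 \le n^2/m$. The non-trivial step here is keeping the Löwner order estimates clean; the factor of $3$ inside the square root in the statement comes from applying matrix Bernstein in its Tropp--Oliveira expectation form,
\begin{align*}
\mathds{E}\|\hat K - K\| \;\le\; \sqrt{2\sigma^2 \log(2n)} + \tfrac{1}{3}R_{\text{eff}}\log(2n),
\end{align*}
and absorbing constants so that both terms are bounded by $\sqrt{3n^2\log n/m}$ and $2n\log n/m$ respectively.

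I expect the only real obstacle to be bookkeeping: matching the constants in the stated bound to the constants in whichever variant of matrix Bernstein is invoked, and handling the mild issue that the $2\times 2$ cosine expansion of $k(x,y)$ used in the paper differs by a variance factor from the straight $e^{-jw^T(x-y)}$ form, which changes the bound on $\|v_i\|_2$ by a harmless constant. Everything else (independence of the $w_i, b_i$, uniform boundedness of $\cos$, shift invariance so that $\mathds{E}[v_iv_i^T]=K$) is immediate from the construction of the randomized features described earlier in the excerpt.
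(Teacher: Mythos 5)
The paper offers no proof of its own here---it simply defers to \cite{lopez2014randomized}---and your argument (writing $\hat K - K$ as an average of i.i.d.\ centered rank-one matrices and applying the expectation form of matrix Bernstein with $R=O(n)$ and $\sigma^2=O(n^2/m)$) is exactly the argument used in that reference, so the approach matches. The only loose end is the one you flag yourself: with the real cosine features normalized so that $\mathds{E}[v_iv_i^T]=K$ one has $\|v_i\|_2^2\le 2n$ rather than $n$, which doubles the variance parameter and produces constants of the form $\sqrt{4n^2\log(2n)/m}$; the stated $\sqrt{3n^2\log n/m}+2n\log n/m$ falls out cleanly only when $\|v_i\|_2^2=n$ exactly (e.g.\ complex exponential features) after absorbing $\log(2n)\le\tfrac{3}{2}\log n$ for $n\ge 4$.
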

\begin{proof}
See \cite{lopez2014randomized}.
\end{proof}
Figure \ref{fig:kern_app} shows the analytic bound versus the empirical error of an approximate
 kernel $\hat{K}$ evaluated on $10^4$ data samples. 

\begin{figure}
\label{fig:kern_app}
\includegraphics[width=.45\textwidth]{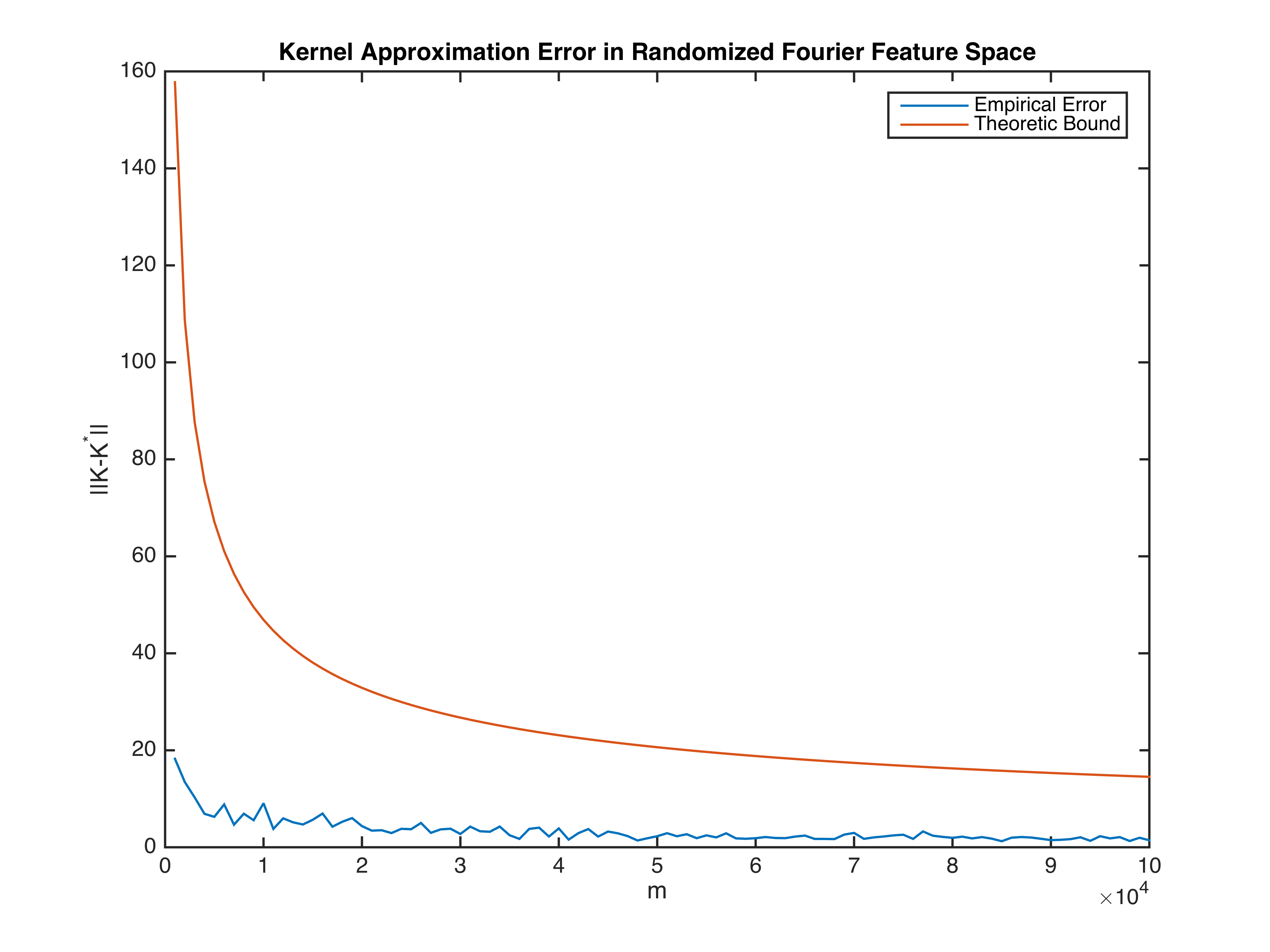}
\caption{\small The analytic and empirical error of the kernel approximation using random Fourier features as a function of the number of features $m$.}
\end{figure}

Introduced in $\cite{NIPS2008_3495}$ for approximating the solution of kernel Support Vector Machine,
 these random feature maps are also useful for solving kernel Principal Component Analysis (kPCA), 
 and kernel Canonical Correlation Analysis (kCCA), \cite{lopez2014randomized}.
The main purpose of using these features is the fact that they enable reducing the complexity
 of kernel methods to linear in the sample size, at the expense of a mild gap in accuracy. 
Here, we extend this idea for solving the problem of ICA.

\section{Randomized Independent Component Analysis}
As demonstrated in \cite{Bach:2003:KIC:944919.944920}, the kernel canonical correlation and the 
 kernel generalized variance measure the statistical dependence between a set of sampled random variables. 
In addition, optimization over these functions for decomposing mixtures of these variables 
 into independent ones is more accurate and robust. 
However, evaluating these functions requires $O(N^3)$ operations. 
Fortunately, for certain types of kernels, these contrast functions can be approximated using random
 features in linear time.

Let $k(x,y) = k(x-y)$ be a shift-invariant kernel function such that $k(x,y)\leq 1$. 
Denote $p(w)$ as the inverse Fourier transform of $k$, that is, 
 $k(x) = \int_{\RR^d} p(w) e^{-i2\pi w^Tx} dx$.
Independently sample $m$ variables $\{w_i\}_{i=1}^m$ from the distribution $p(w)$ and $m$ 
 random numbers $b_i$ uniformly from the section $\left[-\pi,\pi\right]$ and construct the r
  andom Fourier feature map 
   $z(x) = \sqrt{\frac{2}{m}}\left[ \cos(w_1^Tx + b_1) ,..., \cos(w_m^Tx + b_m) \right]$.
We define the  $\mathcal{Z}$-correlation of a pair of random variables $x_1$ and $x_2$ as
\begin{align*}
\rho_z (x_1,x_2) = \max_{w_1 \in \RR^{m} , w_2 \in \RR^{m}} \left| \text{corr} (w_1^Tz(x_1), w_2^T z(x_2)) \right|
\end{align*}

The empirical covariance between $w_1^Tz(x_1)$ and $w_2^Tz(x_2)$ is given by
 \begin{align*}
 &\widehat{\text{cov}}(w_1^Tz(x_1),w_2^Tz(x_2)) = \cr 
 &=\frac{1}{N} \sum_{k=1}^N \langle w_1, z(x_1^k) \rangle \langle w_2, z(x_2^k) \rangle \cr
 &= \frac{1}{N} \sum_{k=1}^N  w_1^T z(x_1^k)  z(x_2^k)^Tw_2  \cr
& =   w_1^T \left( \frac{1}{N} \sum_{k=1}^N z(x_1^k)  z(x_2^k)^T \right)w_2  \cr
&=   w_1^T \hat{C}_{12}^z w_2  
   \end{align*}
Similarly, the empircal variances can be computed as
 \begin{align*}
 \widehat{var}(w_1^Tz(x_1)) = w_1^T \hat{C}_{11}^z w_1  \cr
 \widehat{var}(w_2^Tz(x_2)) = w_1^T \hat{C}_{22}^z w_2,
   \end{align*}
where $\hat{C}_{11}^z = \frac{1}{N} \sum_{k=1}^N z(x_1^k)  z(x_1^k)^T$ 
 and $\hat{C}_{22}^z = \frac{1}{N} \sum_{k=1}^N z(x_2^k)  z(x_2^k)^T$ .

Thus the empirical $\mathcal{Z}$-correlation is the largest generalized eigenvalue of 
 the system
\begin{eqnarray}
\left( \begin{array}{cc} 0 & \hat{C}_{12}^z \\ \hat{C}_{21}^z & 0 \end{array} \right)
\left( \begin{array}{c} w_1 \\ w_2 \end{array} \right)= \cr
\lambda \left( \begin{array}{cc} \hat{C}_{11}^z + \gamma I & 0 \\ 0 & \hat{C}_{22}^z + \gamma I \end{array} \right)
\left( \begin{array}{c} w_1 \\ w_2 \end{array} \right).
\label{eqn-rcca}
\end{eqnarray}
As demonstrated in Figure \ref{fig:kcca_vs_rcca}, as the number of random features grows, 
 the $\mathcal{Z}$-correlation converges to the $\mathcal{F}$-correlation.
\begin{figure}[ht]
\includegraphics[width=.45 \textwidth]{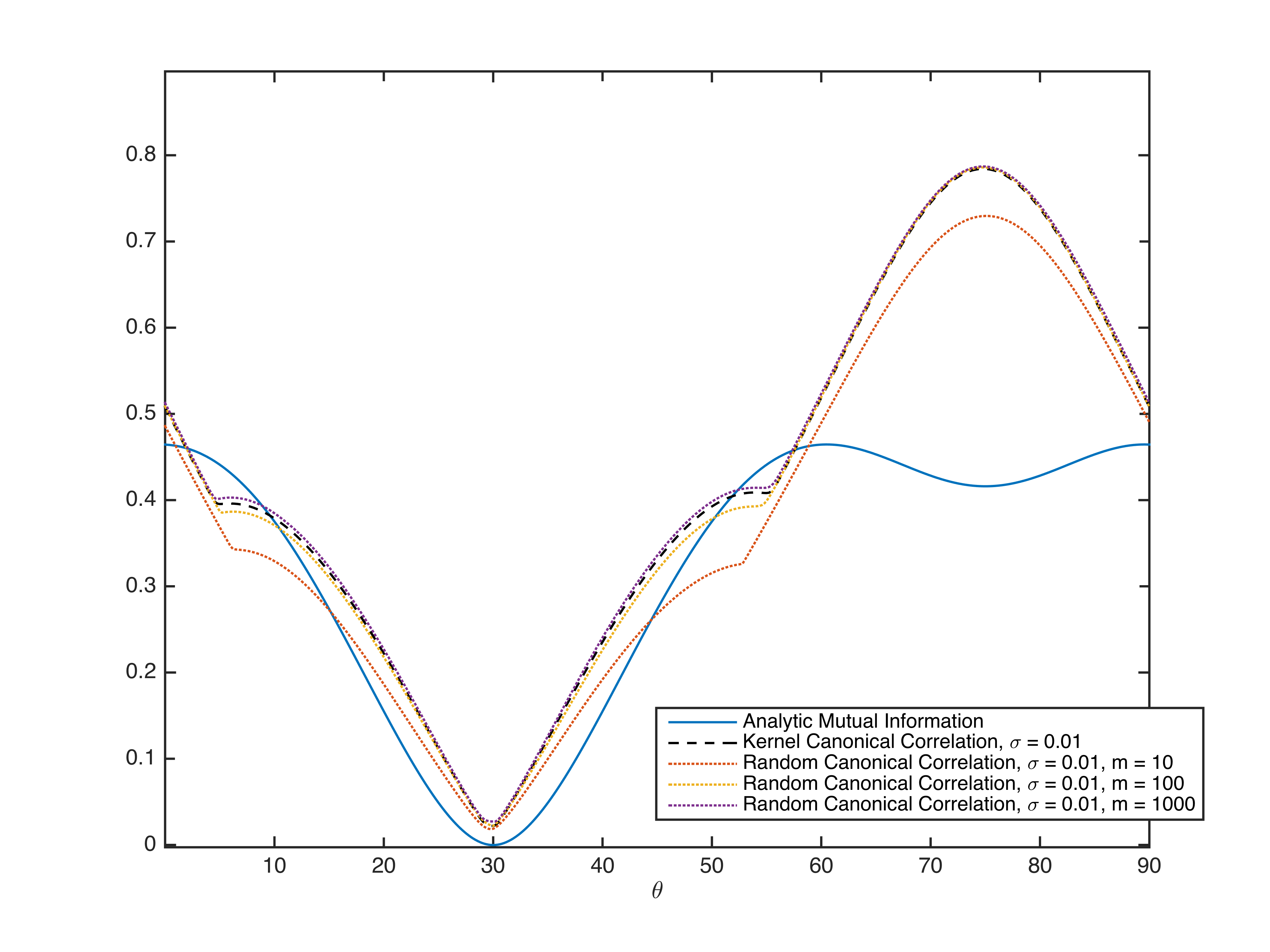}
\caption{\small Approximation of the Kernel Canonical Correlation using Randomized Canonical 
     Correlation with different numbers of features.
   }	\label{fig:kcca_vs_rcca}
\end{figure}
Notice that the size of the generalized eigenvalue system in the random feature space is  
 $n_s m \times n_s m$ which is much smaller than the kernel case where it is of size
 $n_s N \times n_s N$.

The kernel canonical correlation provide less accurate results than the kernel generalized variance 
 since it takes into account only the largest generalized eigenvalue. 
This can also be justified by the Gaussian case where the kernel generalized variance is shown to
 be a second order approximation of the mutual information around the point of independence 
  as $\sigma$ goes to zero.
Thus, we propose to approximate the kernel generalized variance by 
 \begin{align*}
 \delta_z(x_1,...,x_{n_s}) = -\frac{1}{2}\sum_{k=1}^P \log{\lambda_k}
 \end{align*}
where $\lambda_k$ are the generalized eigenvalues of \ref{eqn-rcca}. We define this function
 as the random generalized variance (RGV). 
As demonstrated in Figure \ref{fig:kgv_vs_rgv}, the more features we take for calculating
 the empirical covariances in the random space, the better RGV approximates KGV.
\begin{figure}[ht]
\includegraphics[width=.45\textwidth]{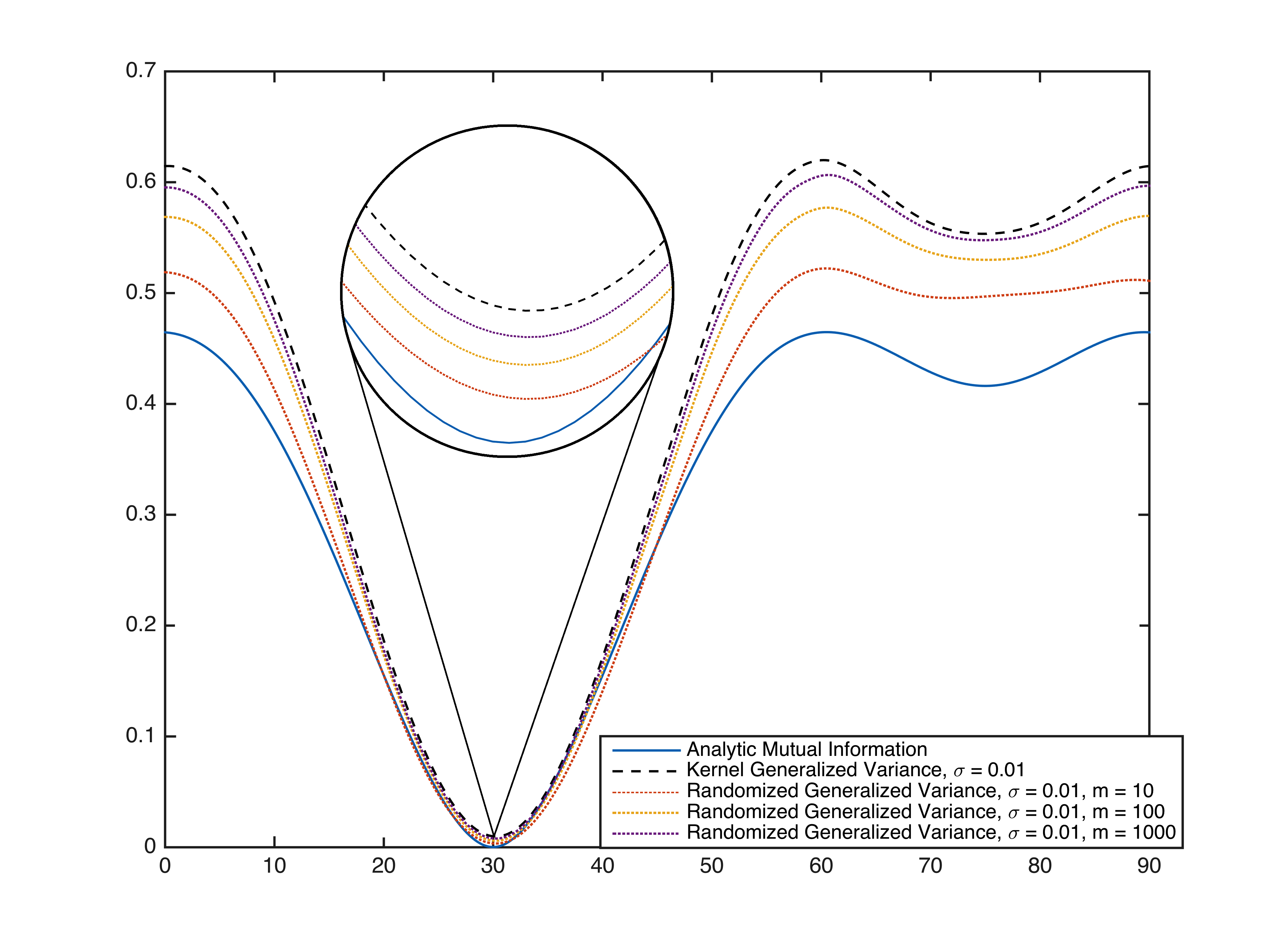}
\caption{\small Approximation of the Kernel Generalized Variance using Randomized Generalized 
     Variance with different numbers of features.
   }
	\label{fig:kgv_vs_rgv}
\end{figure}

\section{Experimental Results}
To evaluate the performance of optimization over our novel contrast functions for solving ICA problems, we independently draw samples from two or more probability functions given in Figure \ref{fig:pdfs}. 
Then, we applied some random transformation with condition number between one to two. 
For measuring and comparing the accuracy of our algorithm in estimating the unmixing matrix from the
 mixed data, we used the Amari distance given by
\begin{align*}
d(V,W) = \frac{1}{2n_s} \sum_{i=1}^{n_s} \left( \frac{\sum_{j=1}^{n_s} |a_{ij}| }
 {\max_j |a_{ij}} - 1 \right) + \cr+\frac{1}{2 n_s} \sum_{j=1}^{n_s}
\left( \frac{\sum_{i=1}^{n_s} |a_{ij}|}{\max_i |a_{ij}|} - 1 \right),
\end{align*} 
where $a_{ij} = (VW^{-1})_{ij}$. This metric, introduced in \cite{amari1996new},
 is invariant to scaling and permutation of the rows or columns of the matrices.
\begin{figure}[ht]
\includegraphics[width=.45 \textwidth]{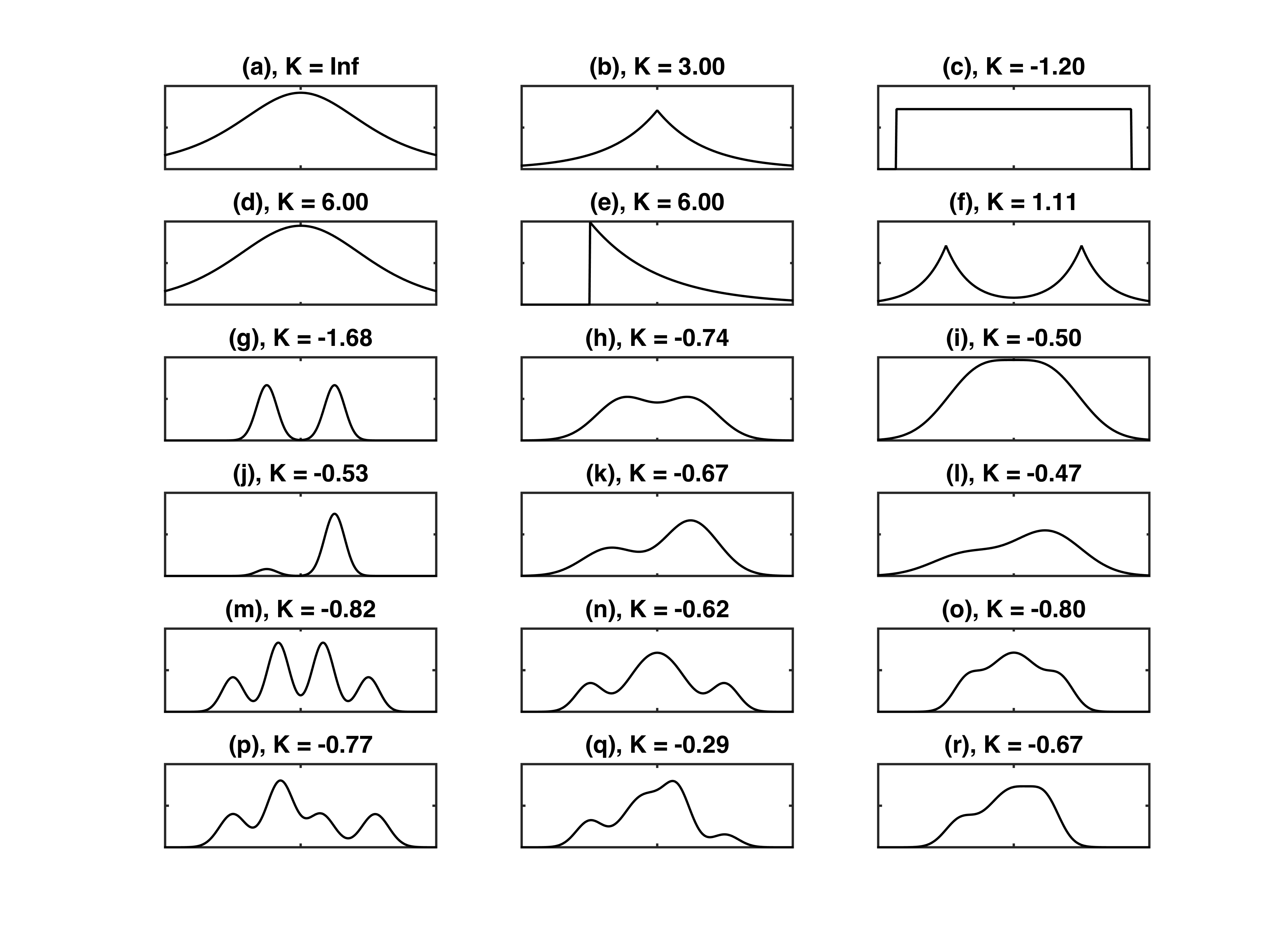}
	\caption{\small Probability density functions used in our tests. 
     These pdf-s are identical to those used in \cite{Bach:2003:KIC:944919.944920}.}
	\label{fig:pdfs}
\end{figure}

First, we evaluated the performance of our randomized ICA approach (RICA) on two mixtures of
 independent sources drawn in independently identically distributed fashion from the pdfs in \ref{fig:pdfs}. We repeated the experiments for 250 and 1000 samples, and the Amari distance for various ICA algorithms
  are summerized in Tables \ref{tbl:pairwise250} and \ref{tbl:pairwise1000}, respectively. 
As expected, the our results are comparable to those of KICA \cite{Bach:2003:KIC:944919.944920}, 
 while our algorithm is strictly linear in the sample size.
\begin{figure}[ht]
\includegraphics[width=.45\textwidth]{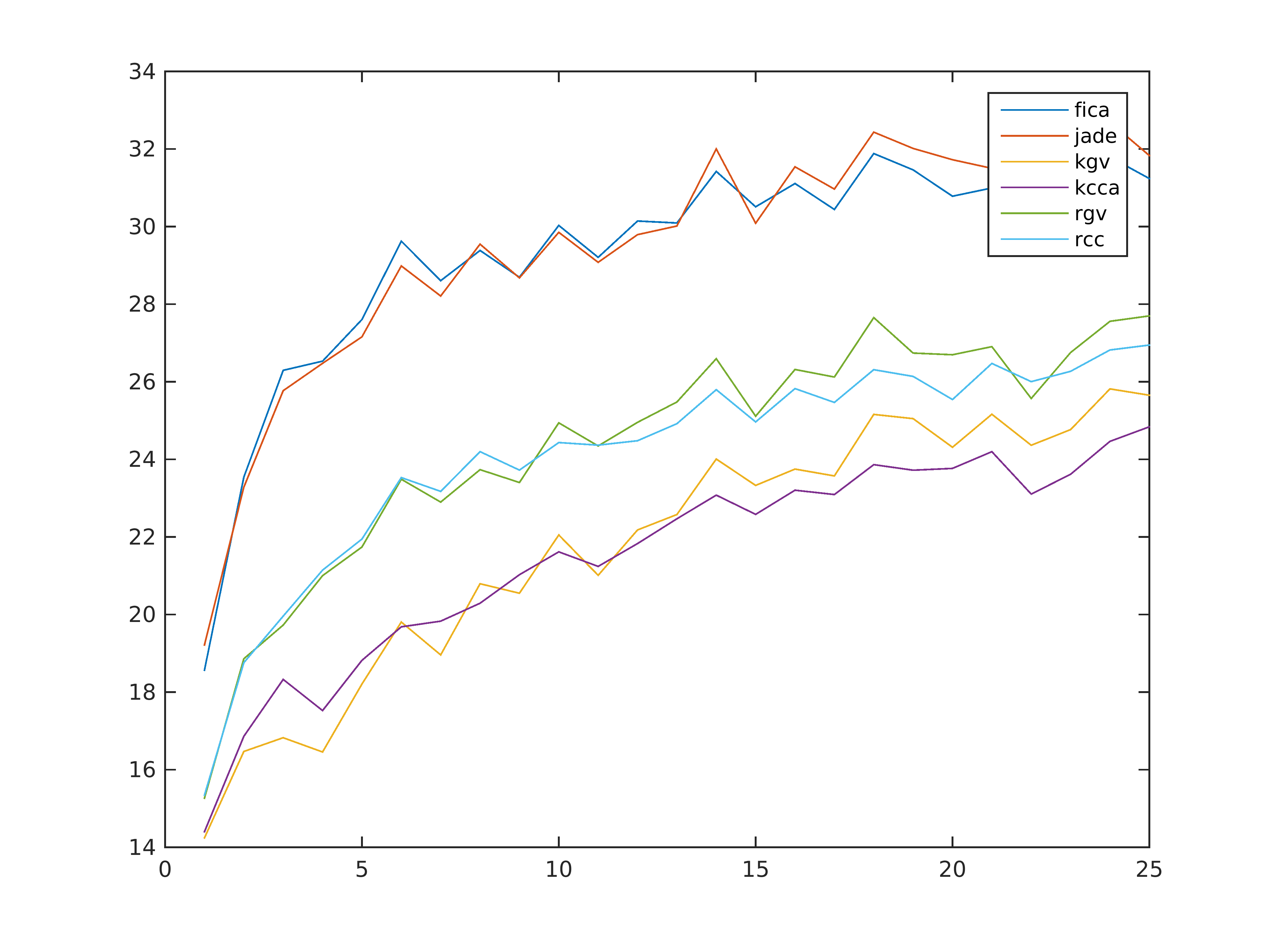}
	\caption{\small Robustness to outliers.}
	\label{fig:outliers}
\end{figure}

\begin{table}[h]
\renewcommand{\arraystretch}{1.1}
\centering
\caption {The Amari errors (multiplied by 100) of state-of-the-art ICA algorithms evaluated on a
 couple of mixtures of independent sources drawn from the denoted pdfs. 
In this experiment we used 250 samples and repeated the experiment 1000 times. 
The KGV, KCC, RCC, and RGV algorithms were initialized with the similar matrices. 
Each row represent the pdfs from which the samples were drawn. 
The mean row represent the average performance of each method and rand row denotes the performance 
 of the algorithms for a mixture of a pairs of sources randomly selected from the pdfs. 
 We repeated this experiment 1000 times.}
\scriptsize
\begin{tabular}{| c | c  c  | c  c  | c c |} 
\hline
pdfs  & F-ica & Jade & KCC & \textbf{RCC}  &  KGV &  \textbf{RGV}  \\
\hline  
a & 8.1 & 7.2 & 9.6 & 8.8 & 7.3 & \textbf{6.9} \\
b & 12.5 & 10.1 & 12.2 & 10.7 & 9.6 & \textbf{8.7} \\
c & 4.9 & 3.6 & 4.9 & 4.5 & \textbf{3.3} & 3.6 \\
\hline
d & 12.9 & \textbf{11.2} & 15.4 & 13.6 & 12.8 & 12.0 \\
e & 10.6 & 8.6 & 3.8 & 3.7 & \textbf{2.9} & 3.1 \\
f & 7.4 & 5.2 & 3.9 & 4.2 & \textbf{3.2} & 3.5 \\
\hline
g & 3.6 & 2.8 & 3.0 & 2.9 & 2.7 & \textbf{2.7} \\
h & 12.4 & \textbf{8.6} & 17.6 & 14.4 & 14.4 & 12.2 \\
i & 20.9 & \textbf{16.5} & 33.7 & 28.9 & 31.1 & 29.0 \\
\hline
j & 16.3 & 14.2 & 3.4 & 3.2 & 2.9 & \textbf{2.9} \\
k & 13.2 & 9.7 & 9.3 & 8.1 & 7.0 & \textbf{6.5} \\
l & 21.9 & 18.0 & 18.1 & 14.4 & 15.5 & \textbf{13.5} \\
\hline
m & 8.4 & 5.9 & 4.3 & 11.1 & \textbf{3.2} & 7.2 \\
n & 12.9 & 9.7 & 8.4 & 11.4 & \textbf{4.7} & 7.5 \\
o & 9.8 & \textbf{6.8} & 15.5 & 14.5 & 10.9 & 10.8 \\
\hline
p & 9.1 & 6.4 & 4.9 & 6.0 & \textbf{3.4} & 4.7 \\
q & 33.4 & 31.4 & 13.0 & 16.5 & \textbf{8.5} & 11.9 \\
r & 13.0 & 9.3 & 13.5 & 11.7 & 9.7 & \textbf{9.2} \\
\hline
\textbf{mean} & \textbf{12.8} & \textbf{10.3} & \textbf{10.8} & \textbf{10.5} & \textbf{8.5} & \textbf{8.7} \\
\hline
\textbf{rand} & \textbf{10.5} & \textbf{9.0} & \textbf{8.0} & \textbf{8.7}  & \textbf{5.9} & \textbf{6.8}\\
  \hline
\end{tabular}
\label{tbl:pairwise250}
\end{table}

\begin{table}[h]
\renewcommand{\arraystretch}{1.1}
\centering
\caption {The Amari errors (multiplied by 100) and the runtime }
\scriptsize
\begin{tabular}{| c | c   c | c  c c c | c c c  c |} 
\hline
pdfs  & F-ica & Jade  & KCCA & \textbf{RCC}&  KGV &  \textbf{RGV}   \\
\hline  
a  & 4.2 & 3.6 & 5.3 & 4.9 & \textbf{3.0} & 3.5 \\
b & 5.9 & 4.7 & 5.4 & 5.3 & \textbf{3.0} & 3.9 \\
c & 2.2 & 1.6 & 2.0 & 1.7 & 1.4 & \textbf{1.4} \\
\hline
d & 6.8 & 5.3 & 8.4 & 7.4 & \textbf{5.3} & 5.9 \\
e & 5.2 & 4.0 & 1.6 & 1.6 & \textbf{1.2} & 1.3 \\
f & 3.7 & 2.6 & 1.7 & 1.9 & \textbf{1.3} & 1.4 \\
\hline
g & 1.7 & 1.3 & 1.4 & 1.2 & 1.2 & \textbf{1.1} \\
h & 5.4 & \textbf{3.9} & 6.5 & 6.0 & 4.4 & 4.4 \\
i & 9.0 & \textbf{6.6} & 12.9 & 12.1 & 10.6 & 10.0 \\
\hline
j & 5.8 & 4.2 & 1.5 & 1.3 & 1.3 & \textbf{1.2} \\
k & 6.3 & 4.4 & 4.3 & 3.6 & \textbf{2.6} & 2.7 \\
l & 9.4 & 6.7 & 7.0 & 6.6 & 4.9 & \textbf{4.9} \\
\hline
m & 3.7 & 2.6 & 1.7 & 3.1 & \textbf{1.3} & 2.0 \\
n & 5.3 & 3.7 & 2.8 & 3.2 & \textbf{1.8} & 2.3 \\
o & 4.2 & \textbf{3.1} & 5.2 & 4.8 & 3.3 & 3.4 \\
\hline
p & 4.0 & 2.7 & 2.0 & 2.4 & \textbf{1.4} & 1.7 \\
q & 16.4 & 12.4 & 3.8 & 4.5 & \textbf{2.1} & 3.0 \\
r & 5.7 & 4.0 & 5.3 & 4.5 & \textbf{3.2} & 3.3 \\
\hline
\textbf{mean} & \textbf{5.8} & \textbf{4.3} & \textbf{4.4} & \textbf{4.2} & \textbf{3.0} & \textbf{3.2} \\
\hline
\textbf{rand} & \textbf{5.8} & \textbf{4.6} & \textbf{3.6} & \textbf{3.7} & \textbf{2.5} & \textbf{2.8} \\
 \hline
\end{tabular}
\label{tbl:pairwise1000}
\end{table}

\begin{table}[h]
\renewcommand{\arraystretch}{1.1}
\centering
\caption {The Amari errors (multiplied by 100) and the runtime analysis in separating a pair of independent audio signals from two recorded mixtures. }
\begin{tabular}{| c | c | c | c | c | c ||} 
\hline  
Method  & $\#$ repl & Amari Distance & Runtime (seconds)  \\
\hline 
KCC &  100 & 3.2 & 337.7 \\
\textbf{RCC} &  100 & 3.3 & 28.4 \\
\hline 
KGV &  100 & 1.2 & 258.0 \\
\textbf{RGV} & 100 & 1.3 & 23.6 \\
\hline 
\end{tabular}
\label{tbl:real_data}
\end{table}

We compared the robustness of our contrast function to outlier samples. 
The evaluation was done by choosing at random a certain number of points in the dataset and adding $5$ or $-5$ at random with probability $0.5$. 
The Amari distance between the estimated matrix and the true one for several algorithms is demonstrated in Figure \ref{fig:outliers}. We repeated the experiments 1000 times and averaged the results.
The most robust algorithms are KICA \cite{Bach:2003:KIC:944919.944920}, while RICA algorithms are still more robust then the others.

Since our framework imitates the kernel ICA methods in the random feature space, we evaluated the accuracy and the runtime of the methods in unmixing real audio signal. The results are given in Table \ref{tbl:real_data}.
With comparable accuracy, our methods run more than ten times faster.

\section{Conclusions}
We proposed two novel pseudo-contrast functions for solving the ICA problem.
The functions are evaluated using randomized Fourier features and thus can be computed in linear time.
As the number of feature grows, the proposed functions converge to the renowned  kernel generalized
 variance  the kernel canonical correlation which require a computational effort that is cubic
 in the sample size.
The accuracy of the proposed ICA methods is comparable to the state-of-the-art but runs over
 ten times faster.
The proposed functions could also be evaluated using Nystrom extension for kernel matrices.

\bibliographystyle{IEEEtran}
\bibliography{refs.bib}

\end{document}